\newtheorem{theorem}{Theorem}[section]
\newtheorem{remark}[theorem]{Remark}
\def\circledarrow#1#2#3{ 
\draw[#1,->] (#2) +(80:#3) arc(80:-260:#3);
}
\newcommand{\SM}{\textsc{SoftModes}}
\renewcommand{\c}{\textbf{c}}
\newcommand{\e}{\textbf{e}}
\newcommand{\BBM}{\textsc{BBM}\xspace}
\begin{document}

\begin{frontmatter}
\title{Clustering Categorical Data: Soft Rounding $k$-modes}
\runtitle{Soft Rounding $k$-modes}
\author{\fnms{Surya Teja} \snm{Gavva}\thanksref{tt}},
\author{\fnms{} \snm{Karthik C.\ S.}\thanksref{t}}
\and
\author{\fnms{Sharath} \snm{Punna}\thanksref{ttt}}

\thankstext{tt}{Email: suryateja@math.rutgers.edu
}
\thankstext{t}{Email: karthik.cs@rutgers.edu}
\thankstext{ttt}{Email: sharath.punna@rutgers.edu}
\runauthor{S.\ Gavva, Karthik C.\ S., and S.\ Punna}

\affiliation{Rutgers University}

\begin{abstract}
Over the last three decades, researchers have intensively explored various clustering tools for categorical data analysis. Despite the proposal of various clustering algorithms,  the classical $k$-modes  algorithm remains a popular choice for unsupervised learning of categorical data. Surprisingly, our first insight is that in a natural generative block model, the $k$-modes algorithm performs poorly for a large range of parameters. We remedy this issue by proposing a soft rounding variant of the $k$-modes algorithm (\SM) and theoretically prove that our variant addresses the drawbacks of the $k$-modes algorithm in the generative model.  Finally, we empirically verify that \SM\ performs well on both synthetic and real-world datasets. 
\end{abstract}
\end{frontmatter}

\section{Introduction}

Algorithms to partition datasets into clusters of similar elements are
at the heart of modern machine learning,
and used in a large spectrum of domains ranging from biology to
econometrics, network sciences, physics, chemistry,
etc.. Clustering algorithms are \emph{unsupervised} machine learning
learning techniques: given very
little information about a dataset (namely, no \emph{a priori}
information such as training examples),
they allow to extract hidden patterns in datasets by identifying groups of
points that have higher pairwise similarity than with the rest of the
dataset. Thus, designing efficient
clustering methods has had major impact on a large number of communities
over the last few decades.

Even though the task of minimizing clustering objectives (such as $k$-means or $k$-median) has been shown to be NP-Hard, even to approximate \cite{CKL22}, there are many heuristics, particularly the $k$-means algorithm \cite{Macqueen67} (i.e., the Lloyd's algorithm \cite{Lloyd82}) which seems to perform very well on most real-world datasets. It is for this reason that a 2002 survey of data mining techniques states that $k$-means algorithm ``is by far the most popular clustering
algorithm used in scientific and industrial applications'' \cite{Berkhin2006} and a different paper listed it as one of the top 10 algorithms in data mining \cite{WuKQGYMMNLYZSHS08}. 

However,  if the data contains categorical variables, (Euclidean) geometric approaches are inappropriate and other strategies must be developed \cite{B89}. With that in mind, many algorithms have been developed for clustering categorical data. In 1998, Huang \cite{Huang1997} proposed the $k$-modes algorithm for categorical data clustering. 
This algorithm begins with $k$ arbitrary ``centers,''
typically chosen uniformly at random from the data points (or from a more careful, computation intensive, distance sampling seeding). Each point is then assigned to its
nearest center in Hamming metric, and each center is recomputed as the center of mass of all points assigned to it (i.e., by taking the plurality on each coordinate of the points assigned to it; breaking ties uniformly at random).
These last two steps are repeated until the process stabilizes. One can check that the $k$-median objective (i.e., sum of distances of each point to its closest center) is monotonically
decreasing, which ensures that no configuration is repeated during the course of the algorithm, and thus the procedure always terminates. 

The $k$-modes
algorithm phrased as above, can then be seen as a variation to the $k$-means algorithm, where we calculate the distance between the object and the cluster center by the Euclidean distance
instead of the Hamming distance. After \cite{Huang1997}, many algorithms have been proposed for the clustering of categorical data,  such as ROCK \cite{GuhaRS99},
CACTUS \cite{GGR99}, COOLCAT \cite{BarbaraLC02}, LIMBO \cite{AndritsosTMS04}, wk-modes \cite{CaoLLZ13}, MOGA \cite{MukhopadhyayMB09}, NSGA-FMC \cite{YangKCQ15}, SBC \cite{QianLLLD16},
MOFC \cite{ZhuX18}, and so on.  Despite a plethora of algorithms proposed in literature for clustering categorical data, the (vanilla) $k$-modes algorithm remains a widely popular tool for categorical data analysis. The reasons for this popularity are essentially the same reasons as for $k$-means algorithm (Lloyds algorithm): scalability and simplicity.

Our first contribution is observing that the $k$-modes algorithm does not perform well in a fairly natural generative block model, which we call as the \emph{Boolean Block Model} (\BBM). In the \BBM model, we are given the parameters the number of samples (denoted by $n$), the number of features (denoted by $d$), a partition of the samples into disjoint subsets $C_1,\ldots ,C_k$ (called clusters), a partition of the features into disjoint subsets $D_1,\ldots ,D_k$, and a 	 $k  \times k$ matrix $P$ of inter-cluster and intra-cluster probabilities. 
The data points are then sampled as follows: for every point $a$ in cluster $C_i$, its entry on a feature in $D_j$ is 1 with probability equal to $P(i,j)$ and 0 otherwise.

We remark that if $n=d$, $P$ is symmetric, and for all $i\in [k]$, if $|C_i|=|D_i|$, then we have   that the \BBM model is closely related to the stochastic block model \cite{HOLLAND1983109} which is well-studied in the context of community detection.  Also, the setting when $n\neq d$, arises fairly naturally, for example,   when the dataset  is obtained by responses to questionnaire with Yes/No answers (specifically answers of a population of size $n$ to $d$ many questions), and we would like to identify the two planted clusters. If many of the questions are asking about group memberships, then any particular applicant would typically answer `No' to many of the questions, and this way of encoding leads to sparse weight samples.

To illustrate, our results, suppose we sample a block diagonal matrix $\textbf{A}=\left[\begin{matrix}
\mathbf{B_1} &\mathbf{0}\\
\mathbf{0}&\mathbf{B_2}
\end{matrix}\right]$, where $\mathbf{B_1},\mathbf{B_2}$ are random $d\times n$ matrices where each entry is 1 with probability $p$ and 0 otherwise.  We are interested in the case when $p<0.5$. 
We think of each column in $\mathbf{A}$ as a sample point in $\{0,1\}^{2d}$ and we would like to cluster them into two groups (ideally we would want one group to be the columns of $\mathbf{B}_1$ and the other to be the columns of $\mathbf{B}_2$).   

Our insight is that regardless of the initial seeding of the centers (i.e., both the seeds can be from the same block or from different blocks), the $k$-modes algorithms updates the new centers both to be $\vec{0}$! After this, the new grouping is random, and the centers regardless remain $\vec{0}$.

\begin{theorem}[Informal statement of Theorem~\ref{thm:main}]
In the \BBM model with $d=(\log n)^{\Omega(1)}$, $k=2$, $|C_1|=|C_2|$, $|D_1|=|D_2|$, and $P=\left[\begin{matrix}
p &q\\
q&p
\end{matrix}\right]$, then if $q<p<0.5$ then the accuracy of $k$-modes algorithm is about 0.5.
\end{theorem}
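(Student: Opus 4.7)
The plan is to show that after a single iteration of $k$-modes both centers collapse to the all-zeros vector $\vec{0}$, that $(\vec{0},\vec{0})$ is an absorbing configuration from which the algorithm cannot escape, and that the output is therefore equivalent to a uniformly random balanced $2$-partition of the $2n$ data points. A standard calculation for a random balanced partition under the optimal label relabeling then gives accuracy $\tfrac{1}{2}+O(n^{-1/2})$, matching the theorem.

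The workhorse is a plurality lemma: for every subset $S$ of data indices with $|S|\geq \alpha n$ for some constant $\alpha>0$, with probability at least $1-e^{-\Omega(n)}$ the coordinate-wise plurality of the points in $S$ is $\vec{0}$. The count of $1$'s among points in $S$ at any fixed coordinate $j\in[2d]$ is a sum of independent $\mathrm{Bernoulli}(p)$ and $\mathrm{Bernoulli}(q)$ indicators and is therefore stochastically dominated by $\mathrm{Binomial}(|S|,p)$ with $p<\tfrac{1}{2}$; a Chernoff bound together with a union bound over the $2d=\mathrm{polylog}(n)$ coordinates (afforded by $d=(\log n)^{\Omega(1)}$), over the $O(n^2)$ possible initial seed pairs, and over the polynomially many iterations the algorithm can run gives the conclusion.

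Given the plurality lemma I split on the initial seeds. When $c_1\in\mathbf{B}_1$ and $c_2\in\mathbf{B}_2$, the expected Hamming distances of a $\mathbf{B}_1$-point to $c_1$ and $c_2$ differ by $2(p-q)^2 d$; since $d=(\log n)^{\Omega(1)}$ and $p>q$, Chernoff concentration at scale $\sqrt{d\log n}$ is smaller than this gap and every data point is assigned to the center in its own block, so each center receives exactly $n$ points and the plurality lemma drives both updates to $\vec{0}$. When $c_1,c_2$ belong to the same block, all expected distances from a data point to $c_1$ and to $c_2$ coincide, so assignment is decided by distance fluctuations (which are symmetric in $c_1\leftrightarrow c_2$) or by uniform tie-breaking; a symmetry plus Chernoff argument shows each center collects $\Omega(n)$ points with probability $1-e^{-\Omega(n)}$, and the plurality lemma again forces both updates to $\vec{0}$.

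Once both centers equal $\vec{0}$, every data point is tied in distance between the two centers, so the assignment step collapses to uniform tie-breaking and produces a uniformly random balanced partition of the data; the plurality lemma applied to each half still returns $\vec{0}$, confirming that $(\vec{0},\vec{0})$ is absorbing, and the algorithm terminates reporting a random partition. A direct binomial calculation shows that the best of the two possible label matchings agrees with the planted partition on $n+O(\sqrt{n})$ of the $2n$ points with high probability, giving accuracy $\tfrac{1}{2}+o(1)$. The main obstacle I foresee is the same-block seeding case, where the expected distances to both centers are equal and one must rule out the pathological event that one center receives nearly all points (which would invalidate the $|S|=\Omega(n)$ hypothesis of the plurality lemma); I expect to resolve this by the symmetry swap $c_1\leftrightarrow c_2$, which exchanges the two assigned subsets so that their expected sizes are exactly $n$, together with a Chernoff bound on the per-point assignment indicators to preclude large deviations.
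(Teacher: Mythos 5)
Your proposal follows essentially the same route as the paper's proof: show that under $q<p<0.5$ the coordinate-wise plurality of any large cluster is $\vec{0}$ with high probability, conclude that both centers collapse to $\vec{0}$ after one update regardless of seeding, observe that $(\vec{0},\vec{0})$ is absorbing so the assignment degenerates to uniform tie-breaking and a random partition, and read off accuracy $\tfrac12+o(1)$. If anything, you are more explicit than the paper about the union bound over coordinates and iterations and about why the same-block seeding still yields two $\Omega(n)$-size clusters, which the paper passes over in a single sentence.
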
 

To further motivate the above theorem's assertion, consider the following scenario: we have access to all the accepted papers to ABC 2022, the top machine learning (ML) conference, and XYZ 2022, the top theoretical computer science (TCS) conference and from this we generate a pool of keywords (features) and for each of the papers in the two conferences, we have an indicator bit (0/1) of whether each of the keywords appeared in that paper. It is reasonable to expect that keywords related to ML (such as ``deep learning'' or even ``artificial intelligence'') tend to appear more in papers of ABC, whereas, keywords related to TCS (such as "algorithms" or "complexity") would appear relatively more in accepted papers of XYZ.

That said, (i) TCS keywords do appear in accepted papers of ABC and ML keywords do appear in papers of XYZ, albeit quantitatively less. (ii) A majority of the ML keywords do not appear in any single ABC paper, as papers tend to be more focussed on the specific problem they address (same holds for TCS keywords in XYZ papers). This justifies (i) noise in non-blocks (i.e., $q<p$) and (ii) sparsity assumption within blocks (i.e., $p<0.5$).
In general, one may consider similar situations arising in textual data mining, where there are a pool of keywords (features) and documents (data points), and we have an identifier bit if the keyword appears in the document.

One may attribute the behavior of the $k$-modes algorithm highlighted in  the theorem above to the hard thresholding features of the majority function, which the algorithm uses to update the centers. A natural approach to remedy this situation is to use soft rounding of the majority function. For example, one may consider replacing the majority function with a sigmoid function that approximates the majority function. However, this only address binary data, and in order to accommodate all kinds of categorical data (i.e., to remedy the hard thresholding of the plurality function), we  propose a general clustering algorithm \SM$(t)$, where $t$ is a hyperparameter. If one sets $t=\infty$ then we obtain the $k$-modes algorithm, and for smaller finite $t$, we obtain a soft rounding version of the $k$-modes algorithm. If fact, \SM$(1)$, for binary data is simply the algorithm which updates the $i^{\text{th}}$ coordinate of the center to 1 with probability $\alpha$, where $\alpha$ is the fraction of 1s in the  $i^{\text{th}}$ coordinate of the points associated to that center. As $t$ increases this rounding becomes more sharp. Note that while soft rounding is a standard technique in machine learning, using the $t^{th}$ power as rounding functions is quite uncommon, and also is a clean theoretical solution as explained in the next section.

   We  show that one can opt for any finite setting of the hyperparameter   $t$ to obtain better classification:

\begin{theorem}[Informal statement of Theorem~\ref{thm:main}]
For every finite $t\ge 1$, in the \BBM model with $d=(\log n)^{\Omega(1)}$, $k=2$, $|C_1|=|C_2|$, $|D_1|=|D_2|$, and $P=\left[\begin{matrix}
p &q\\
q&p
\end{matrix}\right]$, then if $q<p<0.5$ then the accuracy of \SM$(t)$ is about 0.75 with high probability. If $t=1$, and $q$ is sufficiently small, then  \SM$(1)$ \emph{completely} classifies with high probability.
\end{theorem}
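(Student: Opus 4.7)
The proof proceeds by case analysis on the random initial seeding. Since $|C_1|=|C_2|=n/2$ and the two seeds are sampled uniformly, let $A$ be the event that they lie in different clusters and $B$ the event that they lie in the same cluster; each has probability $1/2+O(1/n)$.

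On event $A$, say $s_1\in C_1$ and $s_2\in C_2$. A direct mean computation yields $\mathbb{E}[d(x,s_1)-d(x,s_2)] = -2d(p-q)^2$ for $x\in C_1$ and $+2d(p-q)^2$ for $x\in C_2$. Conditioning on the seeds, $d(x,s_1)-d(x,s_2)$ is a sum of $2d$ independent bounded variables, so Hoeffding's inequality gives a deviation probability of $\exp(-\Omega(d(p-q)^4))$; with $d=(\log n)^{\Omega(1)}$ and a union bound over the $n$ points, every assignment in the first iteration is correct w.h.p. The resulting centers have per-coordinate Bernoulli parameters concentrated near $(p,q)$ and $(q,p)$ on the two blocks, and an induction re-applying Hoeffding (now with centers of known coordinate probabilities in place of seeds) keeps correctness in subsequent iterations. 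Hence the accuracy on event $A$ is $\approx 1$.

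On event $B$, suppose both seeds lie in $C_1$. By symmetry $\mathbb{E}[d(x,s_1)-d(x,s_2)]=0$ for every $x$, and one must inspect the conditional assignment given the actual seeds. For $x\in C_1$ the assignment is essentially a fair coin flip, so each center receives $\approx n/4$ samples from $C_1$. For $x\in C_2$ the distance difference has conditional mean (in $x$, given $s_1,s_2$) equal to a weighted imbalance of the Hamming weights of $s_1,s_2$ on the two blocks, with a constant-order conditional standard deviation. When $q$ is not too small compared to $p$, this bias-to-variance ratio is $\Theta(1)$, so only a moderate fraction of $C_2$ is steered toward one seed; the two new centers, after the soft-rounding $\alpha\mapsto\alpha^t/(\alpha^t+(1-\alpha)^t)$, become close enough in distribution that subsequent iterations stabilise at accuracy $\approx 1/2$. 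Averaging the two events gives $\tfrac12\cdot 1 + \tfrac12\cdot\tfrac12 = 3/4$, and concentration in each case upgrades this to a w.h.p.\ statement.

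When $q$ is small and $t=1$, the bias dominates the variance, so essentially all of $C_2$ is routed to a single seed in iteration~1; since the soft rounding at $t=1$ is the identity, the two centers then carry the strongly asymmetric coordinate profiles $(\approx p,\approx q)$ on one side and $(\approx(p+2q)/3,\approx(2p+q)/3)$ on the other. A direct per-coordinate Hamming computation, followed by Hoeffding, shows that iteration~2 correctly separates $C_1$ from $C_2$, giving complete classification w.h.p. The main technical obstacle is precisely this second iteration of event $B$: one must control two layers of randomness (the data and the Bernoulli soft-rounded centers) simultaneously, and show that the nonlinear rounding preserves the iteration-1 asymmetry at $t=1$ while for $t>1$ it is convex on $[0,1/2]$ and contracts small-parameter Bernoullis toward zero, wiping out the distinguishing signal between $c_1$ and $c_2$.
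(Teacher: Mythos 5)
Your decomposition and both case analyses track the paper's own proof closely: the same split on whether the two seeds land in the same or in different blocks, the same distance-concentration argument giving perfect recovery when they land in different blocks, the same seed-weight-imbalance (anti-concentration) mechanism routing essentially all of $C_2$ to the lighter seed when $q$ is small, and the same $\tfrac12\cdot 1+\tfrac12\cdot\tfrac12$ accounting behind the $0.75$ figure.

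The one substantive inaccuracy is your claim that, when both seeds lie in $C_1$, the assignment of a point $x\in C_1$ ``is essentially a fair coin flip,'' so that each center receives $\approx n/4$ points of $C_1$. Conditioned on the seeds, $\mathbb{E}_x[d(x,s_1)-d(x,s_2)]=(1-2p)\,\delta_1+(1-2q)\,\delta_2$, where $\delta_i$ is the seeds' Hamming-weight gap on block $i$; this is typically of order $\sqrt{d}$, i.e.\ the \emph{same} order as the conditional standard deviation, so the split of $C_1$ is some constant $\alpha$ bounded away from $0$ and $1$ but not $1/2$ in general. Your iteration-two center profiles $\bigl((p+2q)/3,(2p+q)/3\bigr)$ and the ensuing per-coordinate Hamming computation are therefore only the $\alpha=1/2$ special case. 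The paper avoids this by carrying $\alpha$ (and $\beta$) as free parameters, proving only that $\beta\to 1$ as $q\to 0$, and then verifying the second-iteration separation $d(x,\tilde{c}_1)>d(x,\tilde{c}_2)$ for every $\alpha\in(0,1)$; you would need to redo your last computation with general $\alpha$ for the argument to close. Two smaller points: ``concentration in each case upgrades this to a w.h.p.\ statement'' is not right as written, since the accuracy is bimodal ($\approx 1$ or $\approx 1/2$ depending on the seeding) and $0.75$ is really an average over the seeding event, which is also how the paper implicitly treats it; and your closing assertion that for $t>1$ the rounding ``wipes out'' the signal in the same-block case is neither needed for the stated theorem nor established in the paper, which simply falls back on the trivial $1/2$ bound there.
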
 

 If the initial seeding of \SM$(t)$  samples two centers from different blocks, then the centers are updated as follows: the center from the first (resp.\ second) block is updated to some point of Hamming weight roughly $np$ in the first (resp.\ second) half of the coordinates and roughly $nq$ in the second (resp.\ first) half of the coordinates. This implies the clusters have stabilized and the algorithm terminates  (see proof of Theorem~\ref{thm:main} for details). 

However, it is surprising that even when the  initial seeding samples of \SM$(t)$ are two centers from the same block, \SM$(t)$ is able to perfectly classify the points (when $q$ is small). The insight that we use here is that even though the two centers are randomly picked from the same block, anti-concentration of the distribution allows us to conclude that both the centers are not of the exact same Hamming weight. Thus one of the centers will be close (relative to the other center) to \emph{every} point in the other block. Therefore after a couple of iterations we will be able to recover the planted clusters. 
We observed empirically that the above result holds even for dense non-zero anti-diagonal matrices although we could only prove the result theoretically for sufficiently sparse anti-diagonal matrices.

We also introduced a new generative model called \emph{Corrupted Codewords Model}, which brings together ideas from Coding theory and statistics (specifically mixture of Gaussians), and empirically verified that \SM\ performs better than both $k$-modes and $k$-means algorithms. 

Additionally, we performed experiments on real-world datasets and noticed that in all experiments \SM\ has higher accuracy than $k$-modes for suitable tuning of the hyperparameter. One possible explanation for this is that  while \SM\ has tendencies to converge to a local minima, the probabilistic nature of the algorithm makes it non-committal to a specific path to converge. In other words, \SM\ escapes bad local minima with some probability. 

Finally, one of the drawbacks of the $k$-modes algorithm over $k$-means algorithm is the issue with rate of convergence. Since the center of a cluster is unique in the Euclidean metric, and the center of a cluster can be far from unique in the Hamming metric, $k$-means algorithm converges very quickly (this is called the ``zero probability condition'' \cite{GG92}). One might wonder that with the soft rounding, and less committal choices, \SM\ might have an even slower rate of convergence than $k$-modes algorithm. This is indeed the case, but we have empirically observed that while \SM\ takes more iterations to converge, it is at every iteration having a higher accuracy than the $k$-modes algorithm and thus for any cap on the number of iterations and \SM\ would  give better results than $k$-modes algorithm.

\subsection{Related Works}
There have been a plethora of clustering algorithms developed for categorical data analysis over the last three decades and some of these were mentioned earlier in the section.  
We mention here a couple of lines of related works that are similar in spirit to our work. 

One may see $k$-distributions \cite{CWJ07}, $k$-histograms \cite{HXDD05}, and $k$-entropies \cite{HPKLLF14} as principally performing a similar center update step as us: using a distribution stemming from the points assigned to the cluster in order to sample a center. In $k$-distributions, each cluster is captured through the joint probability distribution of the alphabet set and these are updated over iterations but such computations are expensive (and in general NP-hard). In $k$-histograms, each cluster is captured by a histogram of the alphabet set and these are updated over iterations but these either do not acknowledge that distinctness of the features or will require large encoding. In $k$-entropies, each cluster is modeled by its probability mass function and updates it, but this algorithm is particularly slow as it follows the Hartigan's method \cite{HW79} over Llyods's method.

Apart from these works, there are other works which consider explicitly measures different from the Hamming measure while performing categorical clustering \cite{goodall1966new,boriah2008similarity,S68,anderberg2014cluster}. Finally, Fuzzy clustering\cite{huang1999fuzzy} might be seen as a different variant where the randomness is over the step of assigning the points to centers instead of the step of updating the center which is considered in this paper. 
 
Finally, we note that clustering tasks in stochastic block models  are typically  by biclustering methods. In particular, the biclustering of discrete data is explicitly studied in the framework of matrix factorization \cite{Miettinen020,HessPHC21}.

\section{Soft Rounding $k$-modes Algorithm (\SM)}\label{sec:soft}

In this section, we describe a generalization (with hyperparameterization) of the   $k$-modes algorithm. In order to do so, we need to introduce the notion of a rounding function.

\paragraph{Rounding functions.} Let $s\in\mathbb{N}$. Let $\Delta_s$ denote the $s$-simplex, i.e., $\Delta_s=\{(x_1,\ldots ,x_{s+1})\in [0,1]^{s+1}:\sum_{i\in[s+1]}x_i=1\}$.  Let $\e_1,\ldots ,\e_{s+1}$ be the extremal points of the simplex (i.e., $\e_i$ is the point which is 1 on the $i^{\text{th}}$ coordinate and 0 everywhere else, and $\c_s:=(\nicefrac{1}{s+1},\ldots ,\nicefrac{1}{s+1})$ be the center of the simplex. 
We say that  $\rho:\Delta_s\to\Delta_s$ is a rounding function if for every $\textbf{x}:=(x_1,\ldots ,x_{s+1})\in \Delta_s$ and $\forall i,i'\in[s+1]$, we have that if $x_i\ge x_{i'}$ then $\rho(\textbf{x})_i\ge \rho(\textbf{x})_{i'}$. Note that we have $\rho(\c_s)=\c_s$ for all rounding functions. We say a family of functions $\mathcal{F}:=\{\rho_s:\Delta_s\to\Delta_s\}_{s\in\mathbb{N}}$ is a family of rounding functions if every $\rho_s$ is a rounding function.

Next, we consider three different families of rounding functions. 

\paragraph{Plurality rounding.} In $\Delta_s$ we identify $2^{s+1}-1$ many centers, which are the centers of every subface of the simplex. The plurality rounding function simply maps every point in the simplex to one of its centers. Intuitively, this rounding function captures the process of outputting the plurality of a function breaking ties uniformly at random. Formally, it is defined as follows. Let
$S_{\max}(\textbf{x})=\{i\in[s+1]:x_i=\|x\|_\infty\}$. Then, the plurality rounding function is given by
$$\displaystyle \rho_s^{\text{plu}}(\textbf{x})_i=\begin{cases} \nicefrac{1}{|S_{\max}(\textbf{x})|}&\text{ if }i\in S_{\max}(\textbf{x})\\ 0&\text{ otherwise}\end{cases}.$$
As an illustration, we have in Figure~\ref{fig:plural}, the rounding function over the triangle. Notice that the only fixed points in the functions are the 7 centers, and every other point is moving towards one of the centers. 

\paragraph{Uniform rounding.} The uniform rounding function  is simply the identity map $\rho_s^{\text{uni}}(\textbf{x})=\textbf{x}$. Intuitively, this rounding function captures the process of outputting the value of an attribute with probability equal to the extent of its appearance in the data/cluster. 
As an illustration, we have in Figure~\ref{fig:uni}, the rounding function over the triangle. Notice that every point is a fixed point.

\paragraph{Soft rounding.}
Soft rounding is designed to have a spectrum of rounding functions whose extreme end points are the plurality rounding and the uniform rounding. With this in mind, we define the following:
 $$\rho_{s,t}^{\text{soft}}(\textbf{x})_i=\frac{x_i^t}{\sum_{i'\in[s+1]}x_{i'}^t}.$$
It is clear that $\rho_{s,\infty}^{\text{soft}}=\rho_{s}^{\text{plu}}$ and $\rho_{s,1}^{\text{soft}}=\rho_{s}^{\text{uni}}$. As an illustration, we have in Figure~\ref{fig:soft}, the rounding function over the triangle for some $t>1$. Notice that the only fixed points are the centers of the simplex but unlike in the plurality rounding function, the soft rounding functions are smooth and slowly displace towards the centers.  Also, notice that both rounding functions in  Figure~\ref{fig:basic} stabilize (i.e., reach a fixed point) after a single iteration, whereas the rounding in Figure~\ref{fig:soft} may take several iterations to stabilize. 

\begin{figure*}
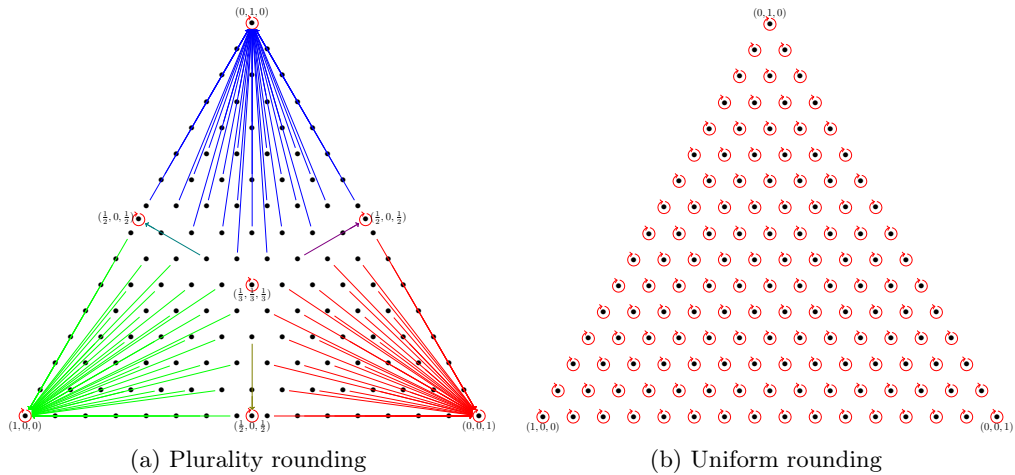

    \centering
    \setkeys{Gin}{width=0.49\textwidth}
\subfloat[Plurality rounding
          \label{fig:plural}]{\resizebox{0.49\textwidth}{!}{\input{pic2}}}
    \hfill
\subfloat[Uniform rounding
          \label{fig:uni}]{\resizebox{0.49\textwidth}{!}{\input{pic1}}}
\caption{Illustrations of rounding functions on the 2-simplex projected on to the plane}
\label{fig:basic}
    \end{figure*}

    \begin{figure*}
    \centering
    \setkeys{Gin}{width=0.7\textwidth}
 {\resizebox{0.7\textwidth}{!}{\input{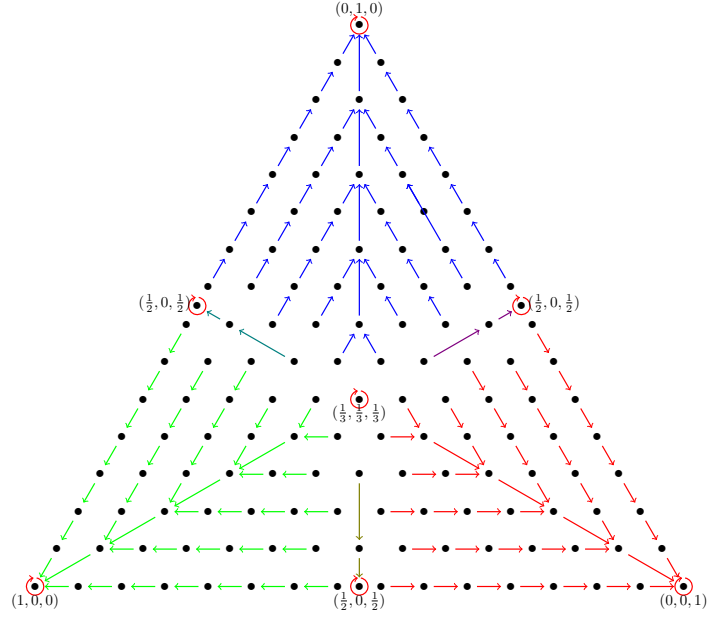}}}
    \caption{Illustration of a soft rounding function on the 2-simplex projected on to the plane}
  
\label{fig:soft}
    \end{figure*}
    
We are now ready to describe our generalization of the $k$-modes algorithm, \SM.
In Algorithm~\ref{alg}, if we use  $\mathcal{F}^{\text{plu}}:=\{\rho_s^{\text{plu}}\}_{s\in\mathbb{N}}$ as the hyperparameter then we simply obtain the $k$-modes algorithm. Throughout the paper, we use the shorthand \SM($t$) to refer to Algorithm~\ref{alg} with hyperparameter $\mathcal{F}:=\{\rho_{s,t}^{\text{soft}}\}_{s\in\mathbb{N}}$. Note that in Algorithm~\ref{alg} we have not specified the termination condition, but for our theoretical results we simply terminate the algorithm if the clusters are not updated and for the experiments we additionally have an upper limit on the number of iterations. Finally,   we denote by $d(\cdot,\cdot)$   the Hamming distance, i.e., the number of coordinates on which the two input points differ. 
    
\begin{algorithm}[!ht]
\DontPrintSemicolon
  
  \KwInput{Categorical dataset $P\subseteq \Sigma_1\times \cdots \times\Sigma_d$ with $d$ features/attributes, where $\forall j\in[d],\  \Sigma_j:=\{1,\ldots ,\sigma_j\}$ (i.e., $\Sigma_j$ is the set of values for the $j^{\text{th}}$ attribute),  $k\in\mathbb{N}$, seeds $c_1^0,\ldots ,c_k^0\in \Sigma_1\times \cdots \times\Sigma_d$}
  \KwOutput{Partition of $P:=P_1\dot\cup\cdots \dot\cup P_k$}
  \KwData{Rounding Function Family $\mathcal{F}:=\{\rho_s:\Delta_s\to\Delta_s\}_{s\in\mathbb{N}}$}
 $r \gets 1$
	
   \While{Termination condition is not satisfied}
   {
   $P_1^r,\ldots ,P_k^r\gets \emptyset$
   
\For{each $p\in P$}{
$j \gets \text{argmin}_{j\in[k]}d(p,c_j^{r-1})$
(breaking ties uniformly at random)

$P_j^r \gets P_j^r \cup \{p\}$
}

\For{each $i\in [k]$}{
\For{each $j\in [d]$}{
Compute $\textbf{x}\in \Delta_{\sigma_j-1}$ where $\forall t\in[\sigma_j], \textbf{x}(t):=\frac{|\{p\in P_i^r: p(j)=t\}|}{|P_i^r|}$

Sample $z\in \Sigma_j$ using the distribution given by $\rho_s(\textbf{x})$

$c_i^r(j)\gets z$
}

}

   	$r \gets r+1$	
   }
   Output $P_1^{r-1},\ldots ,P_k^{r-1}$
\caption{\SM\ Algorithm}\label{alg}
\end{algorithm}

\section{Theoretical Performance of $k$-modes Algorithm in Boolean Block Model}\label{sec:theorem}

In this section, we show that in the Boolean block   model,  $k$-modes algorithm falters, but with the appropriate setting of the rounding function hyperparameter of \SM, we can still obtain good clustering of categorical data.

\begin{theorem}\label{thm:main}
Let $0<p<0.5, q<p$ be constants.
In \BBM\ model, with $k=2$, $|C_1|=|C_2|$, $|D_1|=|D_2|$, $P=\left[\begin{matrix}
p &q\\
q&p
\end{matrix}\right]$  we have the following holds for large $n,d$, and $d$ is sufficiently large in terms of $n$ (say $d=(\log n)^{\Omega(1)}$).  
After $\text{poly}(n)$ iterations, the accuracy score of $k$-modes algorithm is with high probability at most $0.51$. On the other hand, after $\text{poly}(n)$ iterations, the accuracy score of \SM\emph{($t$)}, for every finite $t\ge 1$, is with high probability at least $0.74$. Moreover, if $q$ is sufficiently small and $t=1$, then \SM\emph{($1$)} misclassifies no point with high probability, whereas the accuracy of $k$-modes algorithm remains at most 0.51. 
\end{theorem}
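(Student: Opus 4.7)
The theorem bundles three claims: (i) $k$-modes collapses to essentially random clustering, (ii) \SM$(t)$ has accuracy $\geq 0.74$ for every finite $t\ge 1$, and (iii) \SM$(1)$ perfectly classifies when $q$ is sufficiently small. I would address them in this order.

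\emph{Claim (i).} Since $q<p<1/2$, every entry of the data matrix is Bernoulli with parameter at most $p$. For any partition $P_1^r, P_2^r$ with both parts of size $\Omega(n)$, the empirical fraction of $1$s on each coordinate concentrates in a window of radius $o(1)$ around its expectation (at most $p$) by a Chernoff bound; a union bound over the $d=(\log n)^{\Omega(1)}$ coordinates then gives that the plurality is $0$ on every coordinate. Hence after the first iteration both centers collapse to $\mathbf{0}$, every point becomes equidistant to both, and uniform tie-breaking produces a random bipartition. By induction the centers remain $\mathbf{0}$ forever, and the resulting partition has accuracy $1/2+o(1)\leq 0.51$.

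\emph{Claim (ii).} Condition on the initial seeding. \emph{Case A (seeds in different blocks, probability $1/2$):} assume $c_1^0\in C_1$, $c_2^0\in C_2$. A direct computation gives $\mathbb{E}[d(x,c_{3-i}^0)-d(x,c_i^0)]=d(p-q)^2$ for $x\in C_i$, so Hoeffding plus a union bound over $n$ points (affordable for $d$ large enough in terms of $n$) shows every point is correctly assigned after one round. The soft-rounded centers then have each coordinate independently $1$ with probability $\alpha:=p^t/(p^t+(1-p)^t)$ on in-block features and $\beta:=q^t/(q^t+(1-q)^t)$ on cross-block features; since $x\mapsto x^t/(x^t+(1-x)^t)$ is strictly increasing on $[0,1]$, $\alpha>\beta$, so the separation persists and accuracy is $1-o(1)$. \emph{Case B (seeds in same block, probability $1/2$):} both seeds lie in, say, $C_1$, so the two centers are indistinguishable in expectation from any point; the first assignment is determined by $O(\sqrt d)$-scale fluctuations and is essentially random. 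The resulting centers, being sampled from clusters that mix $C_1$ and $C_2$ roughly equally, have coordinate-wise $1$-probability close to $((p+q)/2)^t/(((p+q)/2)^t+(1-(p+q)/2)^t)$ on \emph{every} feature, washing out the $D_1$-vs-$D_2$ signal; accuracy remains at $1/2+o(1)$. Combining the two cases gives expected accuracy $\geq 3/4-o(1)\geq 0.74$ with high probability over the data.

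\emph{Claim (iii).} For $t=1$ and small $q$, Case A is handled as above. For Case B, I would use anti-concentration of the Binomial: with constant probability the Hamming weights $w_1,w_2$ of the two seeds differ by $\Omega(\sqrt d)$; assume $w_1<w_2$. Using $d(x,c_1^0)-d(x,c_2^0)=(w_1-w_2)-2\langle x,c_1^0-c_2^0\rangle$ and computing the conditional mean for $x\in C_1$ and $x\in C_2$ separately, one verifies that for sufficiently small $q$ the heavier center $c_2^0$ is closer in expectation to points in \emph{both} blocks, so one cluster absorbs essentially all points after the first iteration. At the next re-sampling step the near-empty cluster is repopulated from a small number of outliers whose $D_1$-vs-$D_2$ profile, with constant probability, resembles $C_2$ more than $C_1$. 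Once this asymmetry appears, an inductive argument using $q\ll 1$ to control the cross-block noise shows the correct-cluster gap amplifies geometrically, reaching perfect classification in $O(\log n)$ iterations.

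\emph{Main obstacle.} The most delicate step is Claim (iii), Case B: the initial weight imbalance $|w_1-w_2|$ does not itself carry any $D_1$-vs-$D_2$ information, yet the algorithm must eventually exploit that distinction. The proof must isolate the small component of the asymmetry that correlates with the $D_2$ side and show that uniform rounding at $t=1$ amplifies precisely that component across iterations; the small-$q$ hypothesis appears to be used to ensure the cross-block Bernoulli$(q)$ noise cannot swamp the signal before the correct structure emerges.
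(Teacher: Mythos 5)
Your Claims (i) and (ii) track the paper's argument closely: the collapse of both $k$-modes centers to $\vec{0}$ via concentration, the $(p-q)^2 d$ distance gap under different-block seeding, the monotonicity of $x\mapsto x^t/(x^t+(1-x)^t)$, and the $\tfrac12\cdot 1+\tfrac12\cdot\tfrac12$ accounting behind the $0.74$ bound are all essentially the paper's steps. The genuine gap is in Claim (iii), Case B, which you flag as the main obstacle but do not resolve, and the mechanism you sketch there is not the right one. First, the direction is reversed: for small $q$ a point $y\in C_2$ is nearly all-zero on the $D_1$ features while both seeds (drawn from $C_1$) are nearly all-zero on $D_2$, so $d(y,c)\approx \|y\|_1+\|c\|_1$ and it is the \emph{lighter} center, not the heavier one, that attracts $C_2$. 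Second, $C_1$ is not swept wholesale into one cluster: for $x\in C_1$ the per-point fluctuation of $d(x,c)$ is of the same $\Theta(\sqrt d)$ order as the weight gap between the two seeds, so $C_1$ splits into constant fractions $\alpha$ and $1-\alpha$, whereas $C_2$ goes almost entirely (a fraction $\beta\to 1$ as $q\to 0$) to the lighter center. Consequently there is no near-empty cluster, no repopulation from outliers, and no geometric amplification over $O(\log n)$ rounds to analyze.

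The paper instead finishes in essentially one more iteration by an explicit computation: it writes the updated centers' coordinatewise Bernoulli parameters in terms of $\alpha,\beta$ (e.g.\ $\tilde c_1$ is Bernoulli$\bigl(\frac{\alpha p+\beta q}{\alpha+\beta}\bigr)$ on $D_1$ and Bernoulli$\bigl(\frac{\beta p+\alpha q}{\alpha+\beta}\bigr)$ on $D_2$), argues $\beta\to 1$ as $q\to 0$, and then shows that the typical distance of $x\in C_1$ to the ``pure'' center $\tilde c_2$ tends to $2p(1-p)\cdot\frac d2$ while its distance to the diluted, lighter center $\tilde c_1$ tends to $\bigl(1+p-\frac{2\alpha p^2}{\alpha+1}\bigr)\frac d2$, which is strictly larger since $1+p>2p$ and $\frac{\alpha}{1+\alpha}<1$; symmetrically $C_2$ stays with $\tilde c_1$. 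So all of $C_1$ moves to cluster $2$ and all of $C_2$ to cluster $1$, i.e.\ perfect (label-swapped) classification after two assignment rounds. To repair your write-up, replace the outlier/amplification dynamics with this two-step computation; the small-$q$ hypothesis is used precisely to force $\beta\to 1$ and to kill the cross-block error terms, not to protect a fragile signal across many iterations.
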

\begin{proof}
We split the proof into two cases based on the initial random seeds. 

Case 1: When the two initial centers are from different blocks.


If we pick a center $c_1$ from the first block, then the expected distance of a point $x$ in the first block to $c_1$ on the first $d/2$ coordinates is $p(1-p)d$ and on the last $d/2$ coordinates is $q(1-q)d$. Thus the distance, $d(x,c_1)$, is concentrated in the interval $p(1-p)d + q(1-q)d \pm O(\sqrt d)$ with high probability (follows from the concentration of sums of Bernoulli variables \cite{Ross97}). Similarly, the typical distance of a point $y$ in the second block from $c_1$ is $(p+q-2pq)d + O(\sqrt d)$ (w.h.p). And the typical distances $d(x, c_2)$ and $d(y, c_2)$ are w.h.p.\ $(p+q-2pq)d \pm O(\sqrt d)$, $p(1-p)d + q(1-q)d \pm O(\sqrt d)$ respectively. So the distance $d(x, c_1)$ is smaller than $d(x, c_2)$, for sufficiently large $d$ (in fact we just need $(p-q)^2 > \frac{1}{\sqrt{d}}$). And the distance $d(y, c_2) < d(y, c_1)$. The high probabilities can be seen to be of the order $1- O(\frac{1}{2^{d^{\Omega(1)}}})$.
Thus for $d \ge (\log n)^{\Omega(1)}$ , by union bound, all the points $x$ in the first block are selected into the cluster with center $c_1$ and the points $y$ in the second block get mapped to cluster with center $c_2$. (we assume $d$ is sufficiently large in terms of $n$ to make sure that the $o(1)$ terms remain small).

Now the $k$-modes algorithm computes the new centers for clusters. The fraction of $1s$ in each coordinate turns out to be less than 0.5 because $q<p<0.5$, thus using majority $\rho^{\text{plu}}$ returns $\vec{0}$ for both the clusters. And we find the new clusters using these centers. These clusters will be random (every vector going randomly into one of the two clusters). The new centers again will be $\vec{0}$ with high probability. And this process repeats where at each step we are essentially obtaining random clusters. So the accuracy is at most $0.51$ (w.h.p.).

If we use uniform rounding $\rho^{\text{uni}}$ instead, the first cluster will pick a center whose first $d/2$ coordinates are Bernoulli random variables with parameter $p +O(\frac{1}{\sqrt d})$, and the last $d/2$ coordinates Bernoulli random variables with parameter $q +O(\frac{1}{\sqrt d})$. Thus we obtain  a vector which resembles a typical vector from the first block.
Similarly, the new second cluster center will resemble a vector in the second block and have first $n/2$ coordinates picked with probability $q +O(\frac{1}{\sqrt d})$ and last $d/2$ with probability $p +O(\frac{1}{\sqrt d})$. More generally the $\SM(t)$ will round to centers with parameters
$ \sim p_t =\frac{p^t}{p^t + (1-p)^t}$ on the first $d/2$ coordinates and $ \sim q_t =\frac{q^t}{p^t + (1-q)^t}$ on the last $d/2$ coordinates. 

Using similar arguments with distances as in the first round, we get that points in the first block are closer to the first center and the points in second block are closer to the second center. For instance, for $\SM(t)$, we get
$d(x, c_1) = (p+p_t -2pp_t)d/2 + (q+q_t -2qq_t)d/2 +O(\sqrt d) < d(x, c_2) = (p+q_t -2pq_t)d/2 + (q+p_t -2qp_t)d/2 +O(\sqrt d)$ and similarly $d(y, c_2) < d(y, c_1)$ w.h.p.
 
Thus we get perfect clusters w.h.p.\ and the process terminates.

Case 2: When the initial centers from the same block.

Assume both the centers are from the first block. Assume after projection to the first $d/2$ coordinates the center $c_1$ is more likely, than center $c_2$, to be close to a random vector where all $d/2$ coordinates are Bernoulli with parameter $q$. Now points of the second block are more likely to be closer to $c_1$.  And let $\alpha$ fraction of the points of the first block be closer to $c_1$ and $1-\alpha$ fraction of the points be close to $c_2$

The first cluster contains $\alpha$ fraction of the first block and a bigger $\beta$ fraction of the second block, and the second cluster will have $1-\alpha$ fraction from the first block  and $1-\beta$ fraction from the second block.

Now the majority rounding can be seen to give $\vec{0}$ as the new centers, so just like in the previous case we get random clusters, then majority again rounds to $\vec{0}$ and the process repeats.

Uniform rounding gives a new center for the first cluster, say $\tilde{c}_1$, where the first $d/2$ coordinates are Bernoulli random variables with parameter $\frac{\alpha p + \beta q }{\alpha+\beta}$ and the last $d/2$ coordinates are Bernoulli random variables with parameter $\frac{\beta p + \alpha q}{\alpha+ \beta} $. The center for the second cluster, say $\tilde{c}_2$, on first $d/2$ coordinates are Bernoulli random variables with parameter $\frac{(1-\alpha)p + (1-\beta)q}{2-\alpha  -\beta} $, and the last $d/2$ coordinates are Bernoulli random variables with parameter  $\frac{(1-\beta)p+(1-\alpha)q}{2-\alpha -\beta} $. With these new centers we can compute the typical distance of a point $x$ in the first block  $d(x, \tilde{c}_1)$ to be: $$\frac{d}{2}\left(\frac{\alpha p + \beta q }{\alpha+\beta} (1-p)+ \left( 1-\frac{\alpha p + \beta q }{\alpha+\beta}\right)p +\frac{\alpha q + \beta p }{\alpha+\beta} (1-q)+ \left( 1-\frac{\alpha q + \beta p }{\alpha+\beta}\right)q\right),$$ 
which simplifies to:
$$
\frac{d}{2}\left(\frac{\alpha p  }{\alpha+\beta} (1-p)+ \left( 1-\frac{\alpha p  }{\alpha+\beta}\right)p +\frac{  \beta p }{\alpha+\beta}+\Gamma(p,q)\right) 
,$$
where $\Gamma(p,q)$ is an expression in terms of $p,q$ such that $\lim_{q\to 0} \Gamma(p,q)=0$.

Next, the typical distance to the second center $d(x, \tilde{c}_2) $ to be: 
\begin{align*}
&\frac{d}{2}\left(\frac{(1-\alpha) p + (1-\beta) q }{2-\alpha-\beta} (1-p)+ \left( 1-\frac{(1-\alpha) p + (1-\beta) q }{2-\alpha-\beta}\right)p \right.\\
&\phantom{sdfljkfh}\left.+\frac{(1-\alpha) q + (1-\beta) p }{2-\alpha-\beta} (1-q)+ \left( 1-\frac{(1-\alpha) q + (1-\beta) p }{2-\alpha-\beta}\right)q\right),
\end{align*} 
with high probability and this simplifies to,
$$\frac{d}{2}\left(\frac{(1-\alpha) p  }{2-\alpha-\beta} (1-p)+ \left( 1-\frac{(1-\alpha) p  }{2-\alpha-\beta}\right)p +\frac{  (1-\beta) p }{2-\alpha-\beta}+\Gamma'(p,q)\right) ,
$$
where $\Gamma'(p,q)$ is an expression in terms of $p,q$ such that $\lim_{q\to 0} \Gamma'(p,q)=0$.

We now use the assumption that $q$ is very small, and first note that $\lim_{q\to 0} \beta=~1$. This is because, as $q$ tends to zero, the distance of any point $y$ in the second block to $\tilde{c}_1$ (resp.\ $\tilde{c}_2$) approaches the sum of the Hamming weight of $\tilde{c}_1$ (resp.\ $\tilde{c}_2$) and  Hamming weight of $y$. Since $\tilde{c}_1$ is lower Hamming weight than $\tilde{c}_2$, then $y$ is closer to $\tilde{c}_1$ than $\tilde{c}_2$. Therefore as $q$ tends to 0, the fraction of points in second block going to the first cluster approaches 1. 

Next, if $\beta$ is very close to $1$ then, $d(x, \tilde{c}_2) $ approaches $2p(1-p)$ and $d(x, \tilde{c}_1) $ approaches $1+p-\frac{2\alpha p^2  }{\alpha+1}  $. By noting that $1+p>2p$ and $\alpha/(1+\alpha)<1$, we have that $d(x, \tilde{c}_1) > d(x, \tilde{c}_2)$ (when $\beta$ is very close to 1). Hence $x$ is mapped to the second cluster w.h.p.. Similarly, we obtain the following inequality for the typical distance of a point $y$ in the second block  to the two centers: $d(y, \tilde{c}_1) <  d(y, \tilde{c}_2) $. Hence points in the second block are mapped to the first cluster. 
\end{proof}

\begin{remark}
Theorem~\ref{thm:main} can be extended to handle multiple clusters in the case when the initial centers are all from different blocks (which would be the case if the seeding is done carefully, such as $D^1$ sampling). However, if this is not the case, we could still obtain similar results for certain parameter regimes, although the arguments would be significantly more involved. Getting the result for most parameter ranges is a much more difficult problem and involves getting probability estimates that are currently out of reach in literature. Also, we don’t explicitly mention concentration bounds in our proof, but most of our arguments use concentration of sums of large numbers of variables. The high probability (w.h.p.) bounds come from such concentration estimates. The dependence of the sample size and the dimension of the data in the probability estimates can also be made more precise and explicit, but might not be very interesting to state formally.
\end{remark}

\section{Experiments}
In Section~\ref{sec:syn}, we construct synthetic datasets in the stochastic block model (that empirically verify Theorem~\ref{thm:main}) and in the so-called corrupted codeword model. Then in Section~\ref{sec:real}, we empirically show that \SM\ performs well on many real-world categorical datasets. Finally, in Section~\ref{sec:analyze}, we empirically analyze the convergence of \SM\ and the sensitivity of its hyperparameter. All the code used to perform the experiments in this section are made available at \cite{code}.

The experiments were performed on Ubuntu 20.04 LTS operating system, Dual Intel(R) Xeon(R) CPU E5-2687W @ 3.10GHz - 32 vCores, with memory 256GB. We use the scikit-learn implementations \cite{scikit-learn} for the clustering algorithms $k$-means, BIRCH, and Gaussian. We use the implementation of clustering algorithm ROCK from the library PyClustering \cite{Novikov2019}.
The hyperparameters, whenever relevant, were determined in a hit-and-trial approach by using a wide range of hyperparameters for each algorithm on every dataset.

\subsection{Synthetic Datasets}\label{sec:syn}

\paragraph{Stochastic Block Model.} 
We considered the stochastic block model and sampled $10^4$ points (each block with $5\times 10^3$ points) in $\{0,1\}^{10^4}$ for various values of $p$ and $q$. We then compared $k$-modes algorithm with \SM$(1)$. Each algorithm ran for 50 epochs with random seeding.  The results are summarized in Figure~\ref{fig:bar}.

\begin{figure*}
    \centering
    \setkeys{Gin}{width=0.69\textwidth}
  \includegraphics[scale=0.5]{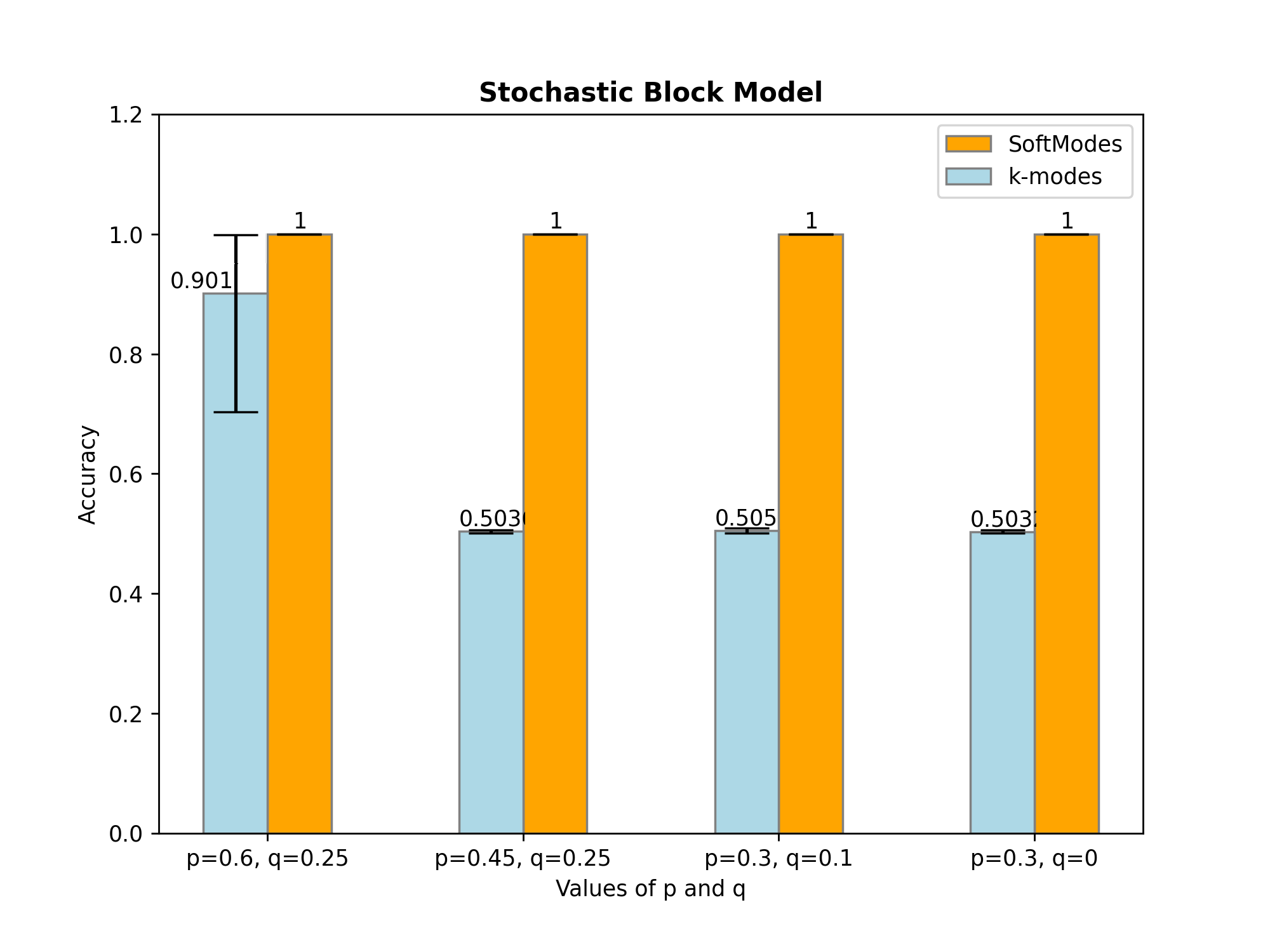} 
  \caption{Bar plots of the experiments on the Stochastic Block Model}
 \label{fig:bar}
    \end{figure*}
    
     From Figure~\ref{fig:bar}, it is clear that when $p<0.5$, \SM$(1)$ classifies perfectly, where as the accuracy of $k$-modes is about 0.5. Moreover, even when $p>0.5$, $k$-modes algorithm even with high accuracy has high variance. These empirical observation are exactly as predicted by Theorem~\ref{thm:main}.
     
\paragraph{Corrupted Codewords Model.} This is a new model that we introduce and is inspired from coding theory, where we are given as parameters the number of samples (denoted by $n$), the number of features (denotes by $d$), the number of clusters (denoted by $k$), and the packing coefficient (denoted by $\varepsilon$). The data points are then sampled as follows:  we first uniformly and independently sample $k$ points (centers) in $\{0,1\}^d$. Then for each center $c$, we obtain each of the $n/k$ points associated with the cluster center $c$ as follows: we flip each coordinate of $c$ uniformly and independently with probability $\varepsilon$. 
Therefore, if $n\ll 2^d$ and $\varepsilon<0.25$, then in this model we obtain $k$ well-separated clusters. This model maybe seen as the Hamming metric counterpart of the mixture of Gaussian model that is extensively studied, particularly in the context of clustering.

We considered the above model and sampled $5\times 10^5$ points in $\{0,1\}^{1000}$ for various values of $k$ with $\varepsilon$ fixed to 0.2. We then compared $k$-modes algorithm and $k$-means algorithm with \SM. Each algorithm ran for 5 epochs with $k$-means++ seeding.  The results are summarized in Figure~\ref{fig:CCMAccuracy}. 

\begin{figure*}
    \centering
    \setkeys{Gin}{width=0.49\textwidth}
\subfloat[Accuracy plot
          \label{fig:CCMAccuracy}]{\resizebox{0.49\textwidth}{!}{\includegraphics[scale=5]{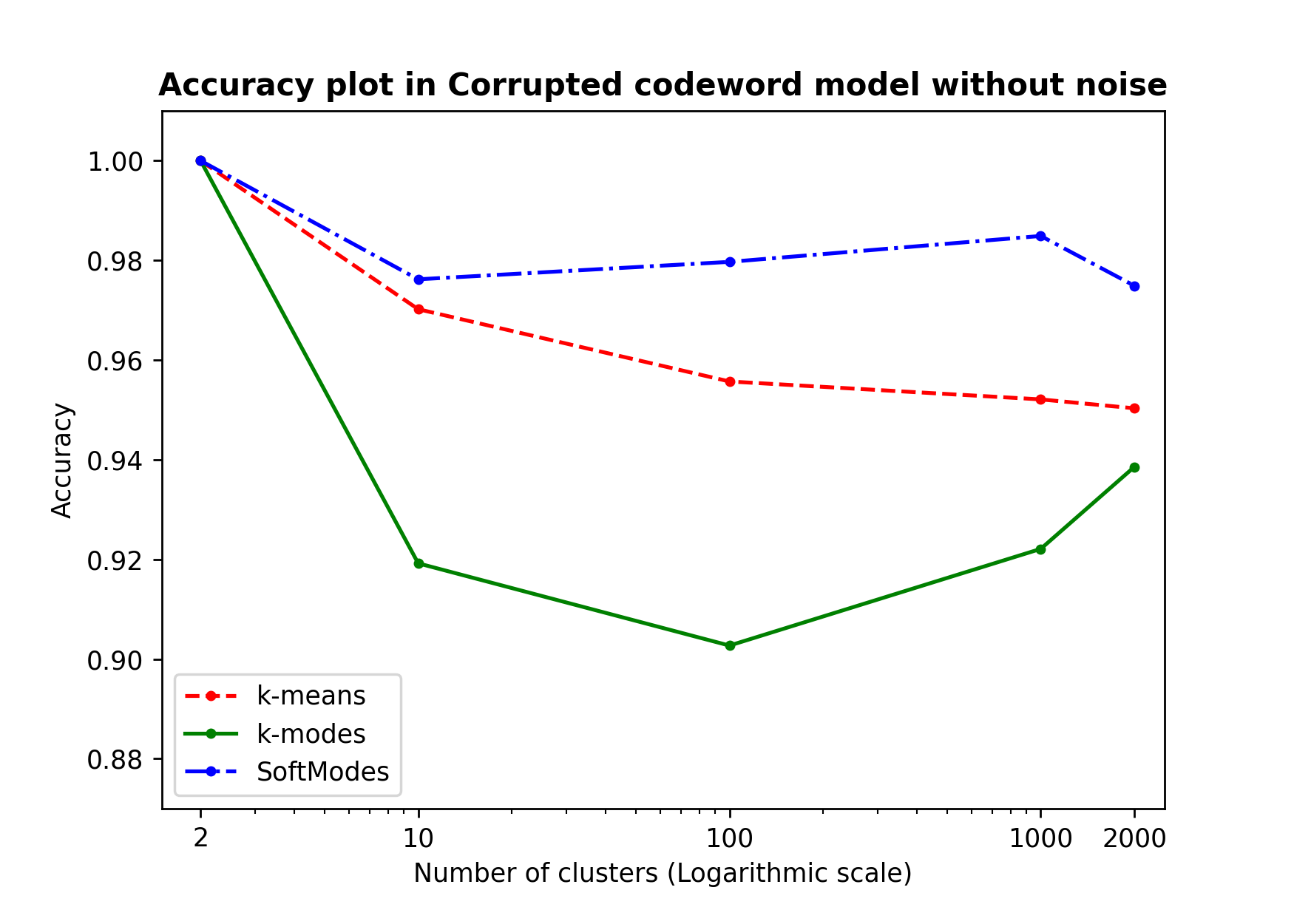}}}
    \hfill
\subfloat[Time plot
          \label{fig:CCMTime}]{\resizebox{0.49\textwidth}{!}{\includegraphics[scale=5]{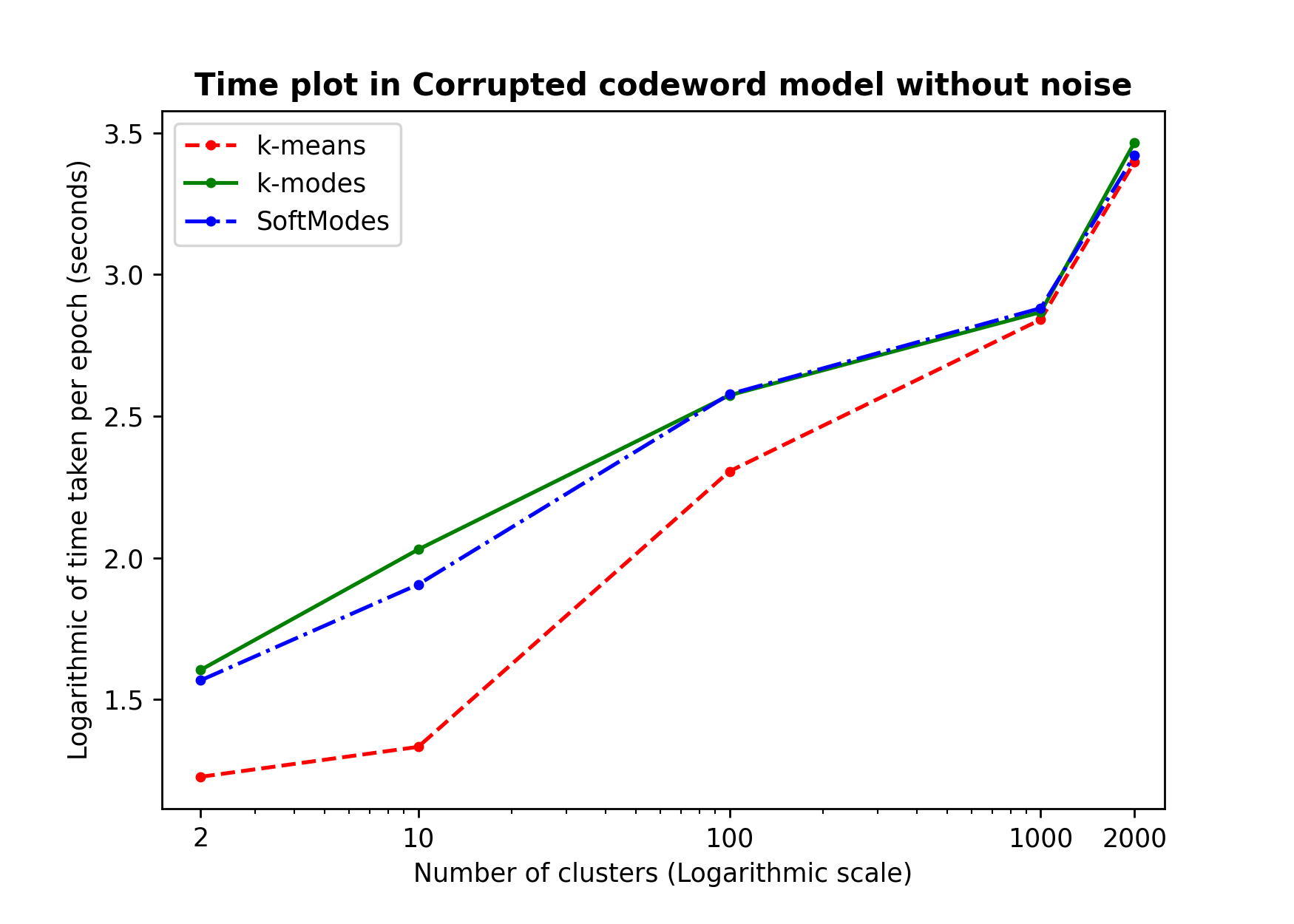}}}
\caption{Line plots of the experiments on the Corrupted codewords model (without noise)}
\label{fig:basicc}
    \end{figure*}

Two important inferences that are clear from Figure~\ref{fig:CCMAccuracy} are that (i) In this model, $k$-means algorithm (which is inherently designed for the Euclidean metric) is surprisingly better than $k$-modes algorithm (which is designed for the 
Hamming metric). This may be attributed to the rounding issues discussed in previous sections. (ii) \SM\ performs better than $k$-means algorithm  as it does not suffer from the rounding drawbacks of $k$-modes algorithm so can better exploit the Hamming metric geometry over $k$-means algorithm.

 In Figure~\ref{fig:CCMTime}, we compare the time needed per epoch for each of the above three algorithms. While $k$-means is noticeably faster  than the other two algorithm for small values of $k$, they all have comparable runtimes for larger $k$. Additionally, we note that rounding function in \SM\ does not make it slower compared to $k$-modes but rather enhances its speed as it converges faster to a local minima.

We note that while our implementation of $k$-modes and \SM\ algorithm involve some parallelization of (distance) computation, they are not optimized, as we only wanted to demonstrate proof of concept. Therefore, it would not be fair to compare directly our implementation of $k$-modes or \SM\ with the scikit-learn implementation of $k$-means algorithm. Thus, we for the time plot we use our own implementation of $k$-means, and note that any optimization that can be or has been done to the $k$-means algorithm can also be done for $k$-modes and \SM. 

Next, we consider the corrupted codewords model but with the additional introduction of a noise parameter $\rho$. Elaborating, we first sample $\rho\cdot n$ points uniformly and independently  from $\{0,1\}^d$ (and label them uniformly at random in $[k]$) and then sample  $(1-\rho)\cdot n$ points in $\{0,1\}^d$ through the corrupted codewords model  (using $k$ random centers and packing coefficient $\varepsilon$). Note that in the  corrupted codewords model with noise parameter $\rho$, the maximum accuracy that any clustering algorithm can achieve is $\frac{\rho}{k} + 1-\rho$. 

We sampled $10^5$ points in $\{0,1\}^{500}$ with noise parameter $\rho$ set to $0.1,0.5,$ and $0.9$, and for various values of $k$ with $\varepsilon$ fixed to 0.2. We then compared $k$-modes algorithm and $k$-means algorithm with \SM. Each algorithm ran for 10 epochs with $k$-means++ seeding.  The results are summarized in Figure~\ref{fig:CCMNoise}.

\begin{figure*}
    \centering
    \setkeys{Gin}{width=0.33\textwidth}
\subfloat[$\rho=0.1$
          \label{fig:rho1}]{\resizebox{0.33\textwidth}{!}{\includegraphics[scale=5]{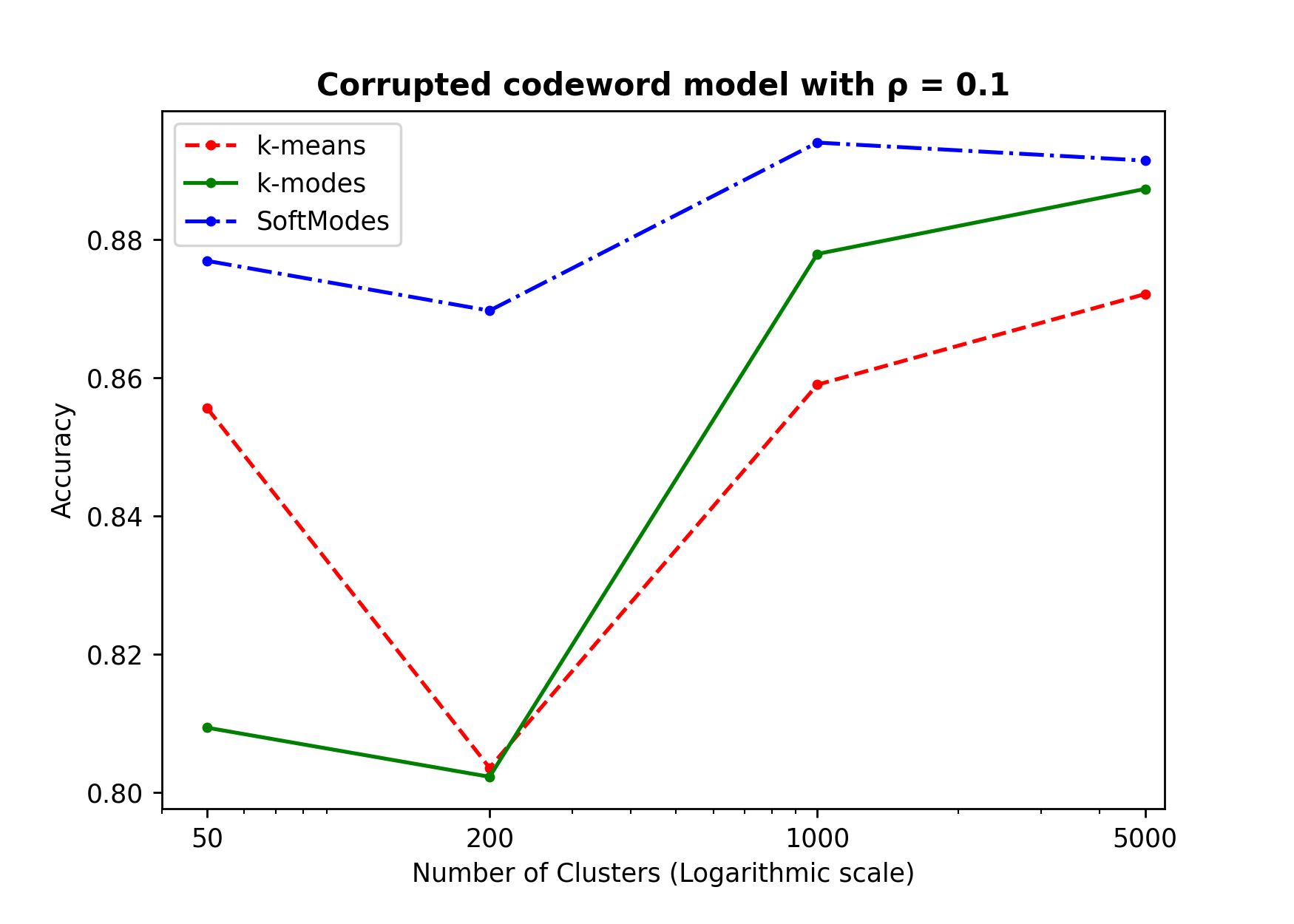}}}
    \hfill
    \subfloat[$\rho=0.5$
          \label{fig:rho5}]{\resizebox{0.33\textwidth}{!}{\includegraphics[scale=5]{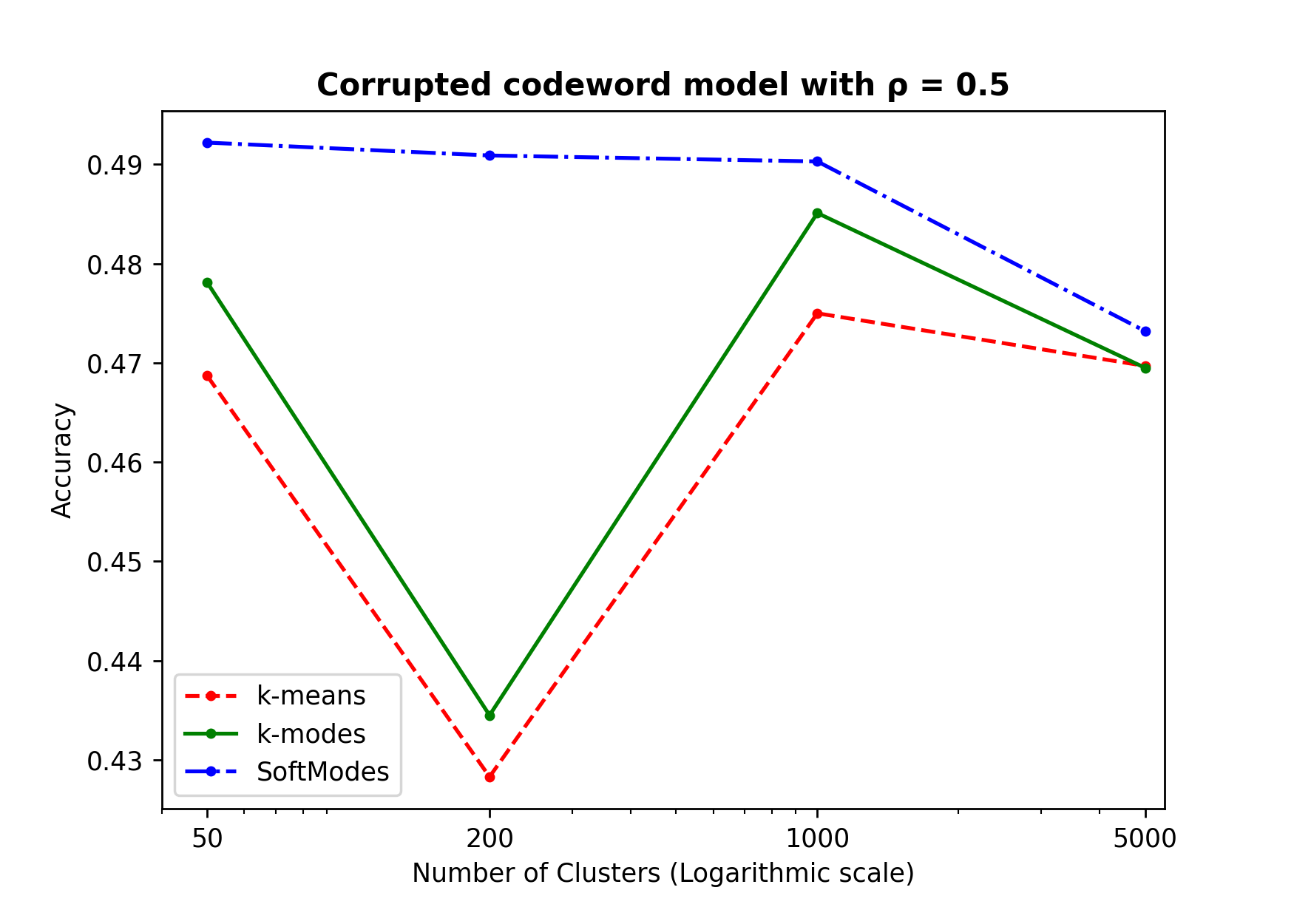}}}
    \hfill
\subfloat[$\rho=0.9$
          \label{fig:rho9}]{\resizebox{0.33\textwidth}{!}{\includegraphics[scale=5]{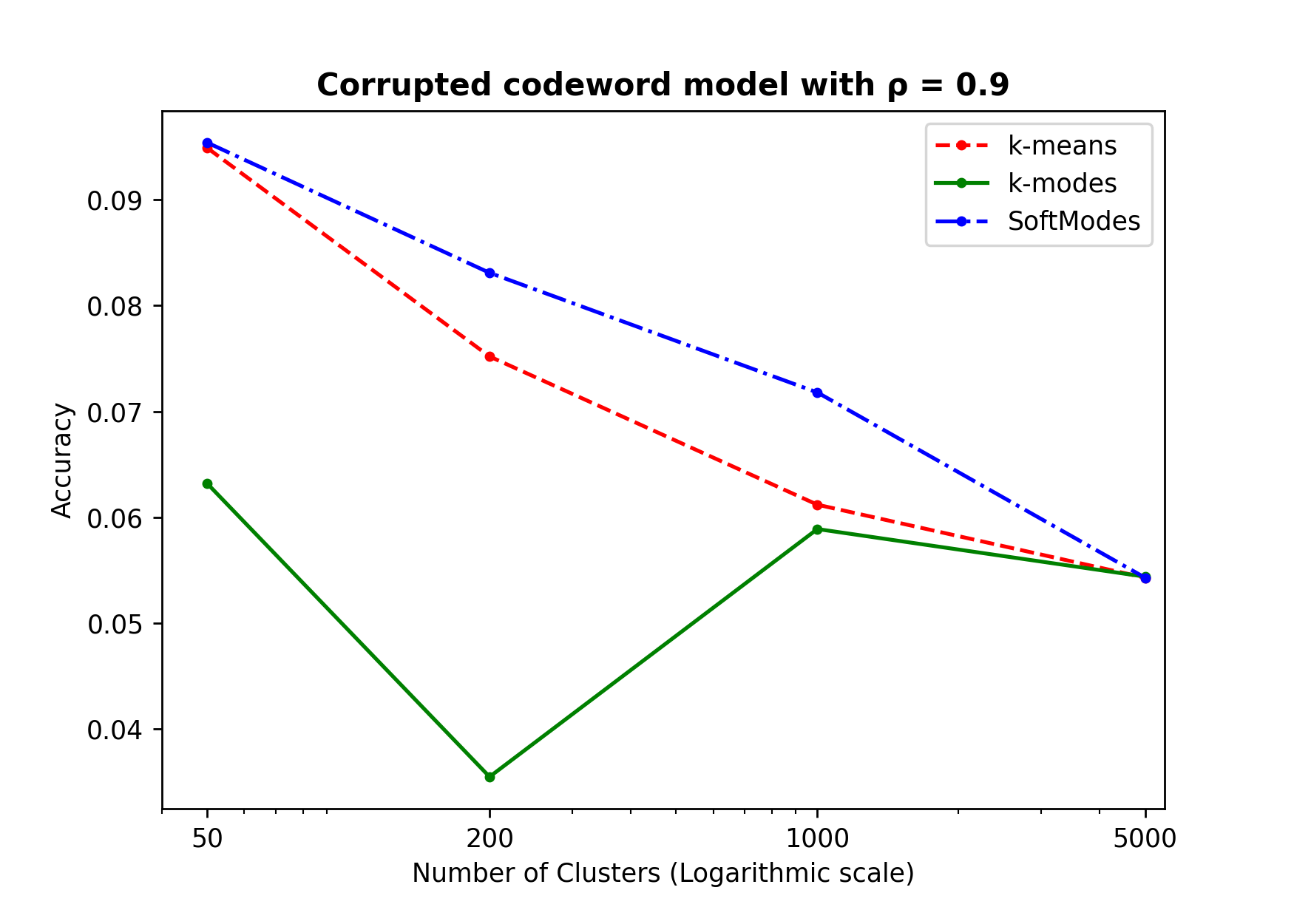}}}
\caption{Line plots of the experiments on the Corrupted codewords model with noise}
\label{fig:CCMNoise}
    \end{figure*}

For all the noise parameter values, \SM\ is consistently having higher accuracy than $k$-means and $k$-modes algorithm. It is also worth noting that as $\rho$ increases, the accuracy even when compared to the theoretical maximum possible of $\frac{\rho}{k}+1-\rho$, goes down for each of the algorithms, as the noise hampers their classification performance even on the "good" samples.  

\subsection{Real World Datasets}\label{sec:real}

We considered five categorical datasets provided by the UCI Machine Learning Repository \cite{Dua:2019}. The information about these datasets is summarized in Table~\ref{infotable}. 

\begin{table}
  \caption{Real World Datasets Information}
  \label{infotable}
  \centering
\resizebox{0.85\linewidth}{!}{  \begin{tabular}{cccccccc}
    \toprule
     Datasets        & Connect-4     &  Nursery &  Mushroom&Tic-Tac-Toe &
      Zoo  \\
    \midrule
    No.\ of Instances & 67557 &12960  &8124 &958 &
    101     \\
    No.\ of Attributes     & 42 &8  &22 &9 &
    17        \\
    No.\ of Classes    & 3 &5  &2 &2 &
    7     \\
    \bottomrule
  \end{tabular}}
\end{table}

We compare the performance of $\SM$\ versus $k$-modes, $k$-means, BIRCH, Gaussian, and ROCK algorithms.  
 We ran $\SM$, $k$-modes, and $k$-means  for 25 epochs with $k$-means++ seeding. All entries in black ran for at most a couple of hours per dataset, per algorithm; the entries in red ran for over a day. ROCK algorithm on Connect-4 dataset did not halt even after a few days.   The results are summarized in Table~\ref{sample-table} (the numbers in blue indicate the hyperparameter values when applicable).

\begin{table}
  \caption{Performance on Real World Categorical Datasets}
  \label{sample-table}
  \centering
  \begin{NiceTabular}{ccccccc}
    \toprule
     Datasets        & $k$-modes    & \SM & $k$-means & BIRCH & Gaussian & ROCK    \\
    \midrule
    
\Block{2-1}{Connect-4} & 0.3836 & {\bf 0.4355}  & 0.3747 & 0.4051 & 0.4191 & -      \\
       & {\scriptsize$\pm$0.0176} &{\scriptsize$\pm$0.0359$({\color{blue}1.5})$}  &{\scriptsize$\pm$0.0035} &  & {\scriptsize$\pm$0.0196} &        \\
       
\Block{2-1}{Nursery} & 0.3038 &  {0.3192}  & 0.2767 & 0.2861 & { 0.323} & {\color{red}\bf 0.3332}     \\
       & {\scriptsize$\pm$0.0184} &{\scriptsize$\pm$0.021$({\color{blue}3})$}  &{\scriptsize$\pm$0.0205} &  &{\scriptsize$\pm$0.0595} &  {\scriptsize$({\color{blue}3})$}     \\

       \Block{2-1}{Mushroom} & 0.7753 & 0.8837  & {\bf 0.8902} & {\bf 0.8902} &0.7987 & {\color{red}0.5201}   \\
       & {\scriptsize$\pm$0.119} &{\scriptsize$\pm$0.0073$({\color{blue}3})$}  &{\scriptsize$\pm$0 } & &{\scriptsize$\pm$0.117}  &   {\scriptsize$({\color{blue}5})$}        \\    

   \Block{2-1}{Tic-Tac-Toe} & 0.5538 & {\bf 0.5817}  & { 0.5678} & 0.5135 & { 0.5678} & 0.5803  \\
       & {\scriptsize$\pm$0.0441} &{\scriptsize$\pm$0.0224$({\color{blue}3.5})$}  &{\scriptsize$\pm$0.0208} &  &{\scriptsize$\pm$0.0208}  &     {\scriptsize$({\color{blue}2})$}   \\


       
       \Block{2-1}{Zoo} &0.7707 & {\bf 0.7986}  & 0.7346 & 0.7326& 0.7801 &0.792      \\
       & {\scriptsize$\pm$0.0779} &{\scriptsize$\pm$0.0774$({\color{blue}3})$}  &{\scriptsize$\pm$0.0747} &  &{\scriptsize$\pm$0.0797}  & {\scriptsize$({\color{blue}2})$}       \\

       \bottomrule
  \end{NiceTabular}
\end{table}

On every dataset \SM\ (with appropriate setting of the hyperparameter) has noticeably higher accuracy than $k$-modes.  Indeed, some of these datasets are better suited to be considered in the Euclidean metric, and therefore in some cases, $k$-means or Gaussians outperform k-modes and \SM. This still does not take away from the message of the paper that \SM\ is a strictly better alternative to $k$-modes. Moreover, even for datasets in which \SM\ is not giving the highest accuracy, it is quite close in accuracy to the best algorithm. 

We note that in terms of asymptotics, the runtime of \SM, $k$-modes, $k$-means, and Gaussian is $O(ndk \cdot \max_{\text{iter}})$, where $n$ is the number of instances, $d$ is the number of attributes, $k$ is the number of classes, and $\max_{\text{iter}}$ is the hyperparameter that indicates the maximum number of iterations the algorithm could run. This runtime is significantly less than the runtime of BIRCH and ROCK, which are hierarchical clustering-based methods that takes a $O(n^2 d)$ time. 

\subsection{Empirical Analysis}\label{sec:analyze}

Finally, in order to analyze the sensitivity of the hyperparameter and rate of convergence, we considered the corrupted codeword model (without noise) and sampled $10^5$ points in $\{0,1\}^{200}$ with $k$ fixed to 100 and  $\varepsilon$ fixed to 0.3. We then ran \SM, $k$-modes, and $k$-means algorithm for 10 epochs with random seeding and noted down the results in Figure~\ref{fig:common}.

\begin{figure*}
    \centering
    \setkeys{Gin}{width=0.49\textwidth}
\subfloat[ 
          \label{fig:subfig-a}]{ \includegraphics[scale=5]{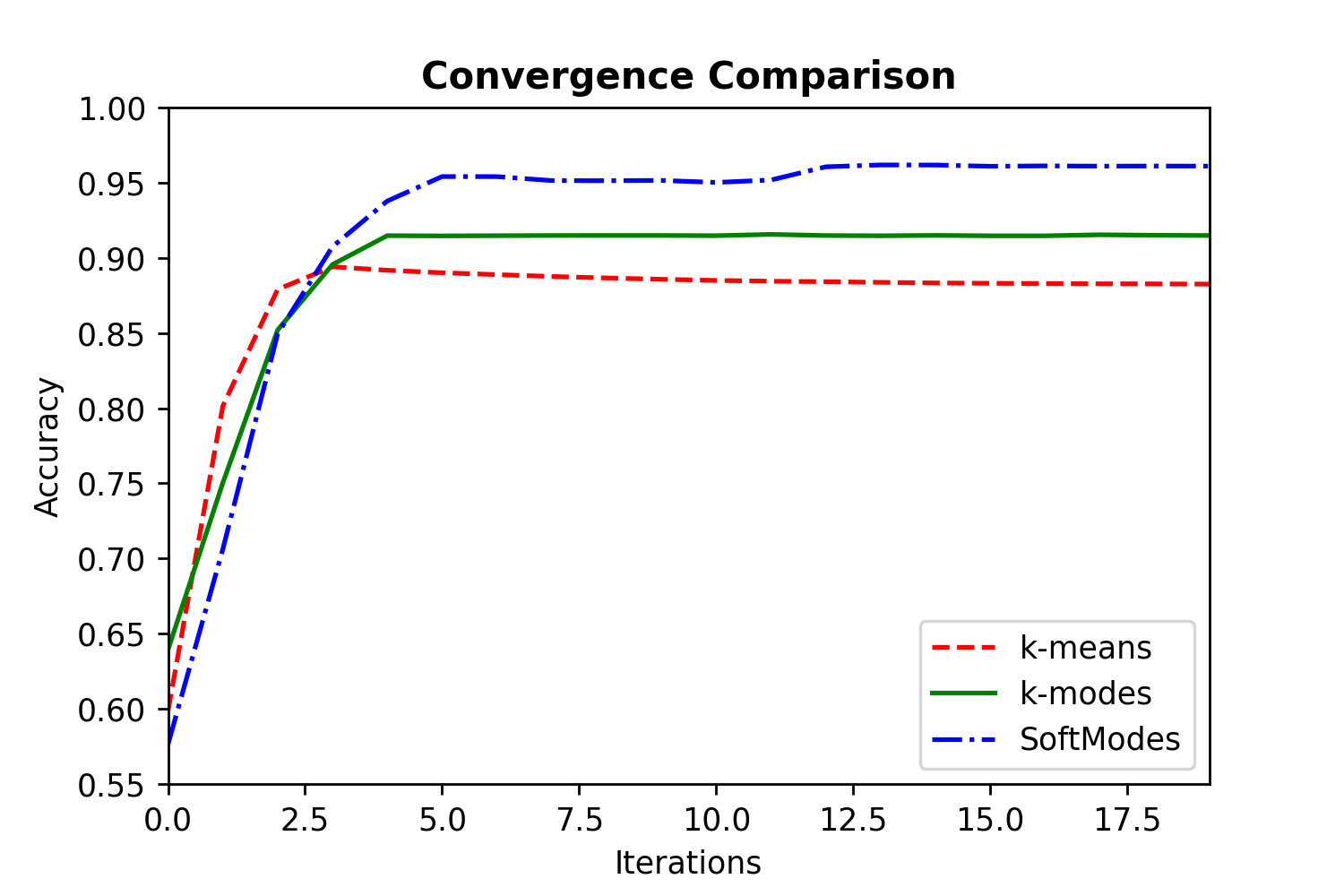}}
    \hfill
\subfloat[ 
          \label{fig:subfig-b}]{ \includegraphics[scale=5]{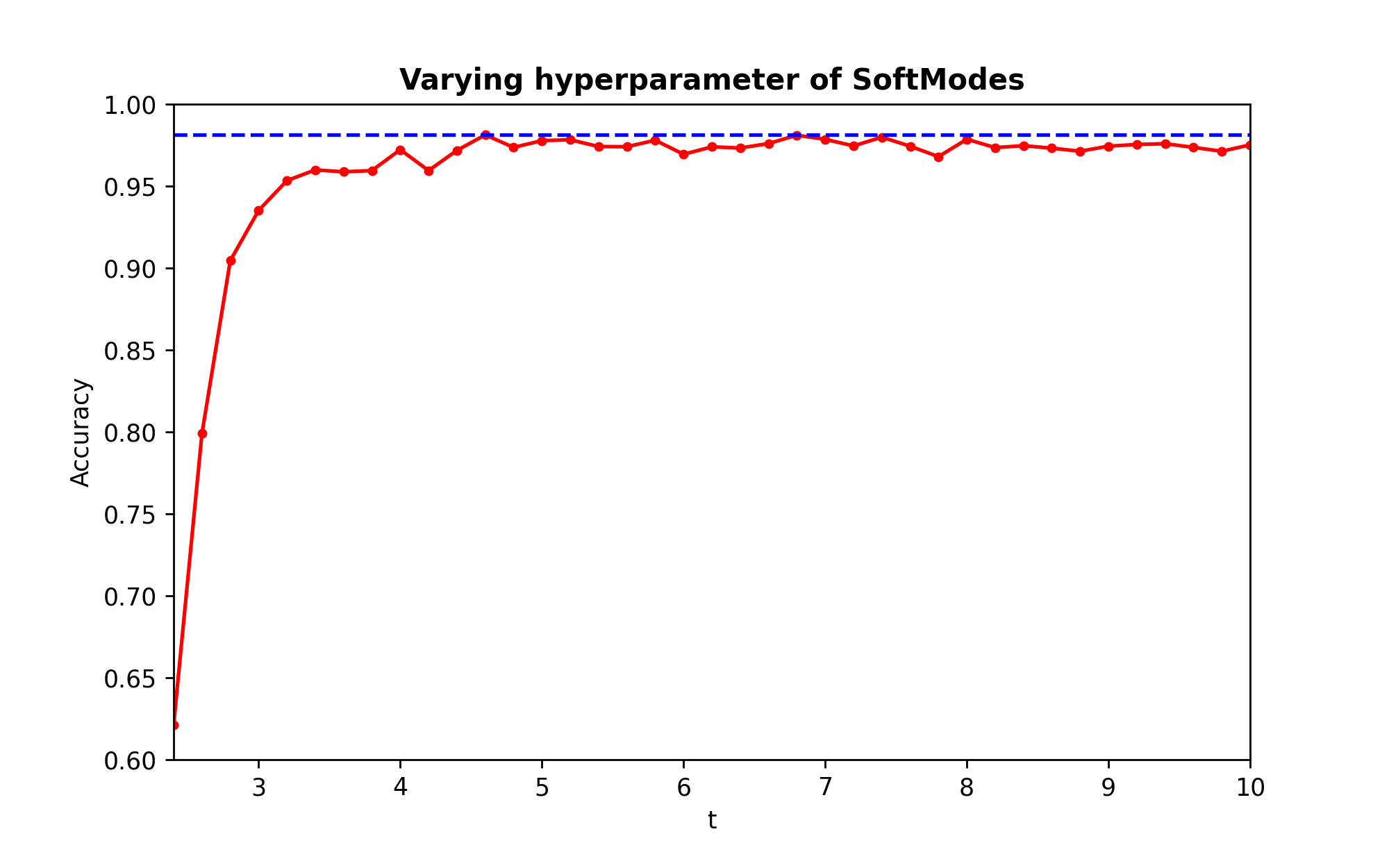}}
    \caption{Empirical Analysis of Convergence and Hyperparameter sensitivity of \SM}
\label{fig:common}
    \end{figure*}
    
While $k$-means converges slightly faster to its local minima than \SM\ in Figure~\ref{fig:subfig-a}, the latter continues to make slow but steady progress. Also it is worth noting that rate of convergence of $k$-modes is less than \SM .    Finally, we remark that the hyperparameter is not very sensitive after a particular threshold is crossed (as seen in Figure~\ref{fig:subfig-b}) and this is expected as the performance of \SM\ is lower bounded  by the performance of $k$-modes algorithm.

    As a side note, we remark that \SM\ is sensitive to initialization and an initial seeding algorithm, like k-means++ seeding, does give observably better accuracy results than random seeding. Note that over the Hamming metric, the $D^1$ and $D^2$ sampling have identical distributions.

\section{Conclusion}
In this paper, we identified (theoretically and empirically) the poor performance of the $k$-modes algorithm in the Boolean Block   model. We remedied the situation by suggesting a soft rounding variant of $k$-modes, namely, \SM. We remark that \SM, much like $k$-means, and $k$-modes can be  parallelized to yield scalability. We leave it as an open question if one come up with a more sophisticated rounding family of functions for which one can theoretically prove constant factor approximation of the $k$-modes objective. Another open question is to theoretically prove that \SM\ performs better than $k$-means in the corrupted codewords model. 
Finally, 
it would be interesting to prove performance guarantees for \SM  when $p > 0.5$, or remove the assumption on $q$ to be sufficiently small for the last conclusion in Theorem~\ref{thm:main} to hold.

\subsection*{Acknowledgment}
This work was supported by a
grant from the Simons Foundation, Grant Number 825876, Awardee Thu D. Nguyen.

\bibliographystyle{alpha}
\bibliography{refs}


\end{document}